\newtheorem{thm}{Theorem}
\newtheorem{lemma}{Lemma}
\newtheorem{assumption}{Assumption}
\theoremstyle{definition}
\newcommand{\field}[1]{\mathbb{#1}}
\newcommand{\R}{\field{R}}
\newcommand{\E}{\field{E}}
\newcommand{\bg}{\boldsymbol{g}}
\newcommand{\bG}{\boldsymbol{G}}
\newcommand{\bx}{\boldsymbol{x}}
\newcommand{\bI}{\boldsymbol{I}}
\newcommand{\bw}{\boldsymbol{w}}
\newcommand{\bu}{\boldsymbol{u}}
\newcommand{\bU}{\boldsymbol{U}}
\newcommand{\bv}{\boldsymbol{v}}
\title{No-regret Non-convex Online Meta-Learning}
\name{Zhenxun Zhuang $^{1}$, Yunlong Wang $^2$,Kezi Yu $^2$, Songtao Lu $^3$\thanks{The work was done when S.L. was with the University of Minnesota.}}
\address{$^1$ Department of Computer Science, Boston University, Boston, MA 02215 \\  $^2$ Advanced Analytics, IQVIA, Plymouth Meeting, PA 19462 \\  $^3$ IBM Research AI,  IBM Thomas J. Waston Research Center, NY 10598} 
\begin{document}
\ninept
\maketitle
\begin{abstract}
The online meta-learning framework is designed for the continual lifelong learning setting. It bridges two fields: meta-learning which tries to extract prior knowledge from past tasks for fast learning of future tasks, and online-learning which deals with the sequential setting where problems are revealed one by one. In this paper, we generalize the original framework from convex to non-convex setting, and introduce the local regret as the alternative performance measure. We then apply this framework to stochastic settings, and show theoretically that it enjoys a logarithmic local regret, and is robust to any hyperparameter initialization. The empirical test on a real-world task demonstrates its superiority compared with traditional methods.
\end{abstract}
\begin{keywords}
Meta learning, online learning, non-convex optimization
\end{keywords}

\section{Introduction}
\label{sec:intro}

In recent years, high-capacity machine learning models, such as deep neural networks~\cite{lecun2015deep}, have achieved remarkable successes in various domains~\cite{silver2016mastering,redmon2016you,amodei2016deep}. However, domains where data is scarce remain a big challenge as those models' ability to learn and generalize relies heavily on the abundance of training data. In contrast, humans can learn new skills and concepts very efficiently from just a few experiences. This is because when encountering a new task, learning algorithms start completely from scratch; while humans are typically armed with plenty of prior knowledge accumulated from past experience which may share overlapping structures with the current task, and thus can enable efficient learning of the new task.

Meta-learning~\cite{naik1992meta, thrun2012learning, vinyals2016matching} was designed to mimic this human ability. A meta-learning algorithm is first given a set of meta-training tasks assumed to be drawn from some distribution, and attempts to extract prior knowledge applicable to all tasks in the form of a meta-learner. This meta-learner is then evaluated on an unseen task, usually assumed to be drawn from a similar distribution as the one for training. Recent years have seen a surge of interests in this field resulting in numerous achievements, among which a seminal work is the gradient-based algorithm: MAML~\cite{finn2017model}. Due to its simplicity yet great efficiency and generality, it has initiated a fruitful line of research~\cite{nichol2018first, antoniou2018how, collins2020distribution}. However, like other meta-learning algorithms, it assumes all meta-training tasks are available together as a batch, which doesn't capture the sequential setting of continual lifelong learning in which new tasks are revealed one after another.

Meanwhile, online learning~\cite{cesa2006prediction}
specifically tackles the sequential setting. At each round $t$, one picks an $\bx_t$, and suffers a loss $f_t(\bx_t)$ revealed by a potentially adversarial environment. The goal is to minimize the \emph{regret}, the difference between the cumulative losses suffered by the algorithm and that of any fixed predictor, formally:
\begin{equation}
\text{Regret}_T(\bx)
:=
\sum^T_{t=1}f_t(\bx_t) - \sum^T_{t=1}f_t(\bx)~.
\label{eq:reg}
\end{equation}
Yet, online learning sees the whole process as a single task without adaptation for each single step.

Neither paradigm alone is ideal for the continual lifelong learning scenario, thus, Finn et al.~\cite{finn2019online} proposed to combine them together to construct the Online Meta-Learning framework which will be discussed in Section~\ref{sec:background}. However, this framework has a strong convexity assumption, while many problems of current interest have a non-convex nature. Thus, in Section~\ref{sec:noncon}, we generalize this framework to the non-convex setting. Section~\ref{sec:algo} presents an exemplification of our algorithm with rigorous theoretical proofs of its performance guarantee. Real data experiment results are shown in Section~\ref{sec:exp}. In the end, concluding remarks and takeaways are provided in Section~\ref{sec:conc}. To the best of our knowledge, it is the first theoretical regret analysis for non-convex online meta-learning algorithms, shedding the light of applying online meta-learning for more challenging learning problems in the paradigm of deep neural networks.

% The main contribution of the paper is \textcolor{red}{in the proposed method and in proving/analyzing/ that the method leads to xx by yyy bound? Songtao, do we need this sentence?}

% Theoretical analysis of proving a logarithmic local regret robust to any hyper-parameter initialization is in Section~\ref{sec:analysis}. Simulation for algorithm comparison and coclusions are 

% we exemplify the power of Algorithm~\ref{algo:mlol} by using the AdaGrad-Norm~\cite{ward2018adagrad} algorithm as the online learning algorithm $\mathcal{A}$.

% We provide the theoretical analysis of proving a logarithmic local regret robust to any hyperparameter initialization in Section~\ref{sec:analysis}.  Section~\ref{sec:exp} shows the empirical comparison of our algorithm with traditional methods.

\textbf{Notation.}\ We use bold letters to denote vectors, e.g., $\bu,\bG\in\R^d$. The $i$th coordinate of a vector $\bu$ is $u_i$. Unless explicitly noted, we study the Euclidean space $\R^d$ with the inner product $\langle\cdot, \cdot\rangle$, and the Euclidean norm. We assume everywhere our objective function $f$ is bounded from below and denote the infimum by $f^\star > -\infty$. The gradient of a function $f$ at $\bx$ is $\nabla f(\bx)$. $\E[\bu]$ means the expectation w.r.t. the underlying probability distribution of a random variable $\bu$.

\section{Background}
\label{sec:background}

\begin{algorithm}[t]
\begin{algorithmic}[1]
\STATE{\textbf{Input}: An initial meta-learner $\bw_1$, a loss function $\ell(\cdot)$, a local adapter $\bU(\cdot)$, an online learning algorithm $\mathcal{A}$}
\FOR{$t = 1,2,\ldots$}
\STATE{Encounter a new task: $\mathcal{T}_t$}
\STATE{Receive training data for current task: $\mathcal{D}_t^{tr}$}
\STATE{Adapt $\bw_t$ to current task: $\hat{\bw}_t = \bU(\bw_t, \mathcal{D}_t^{tr})$}
\STATE{Receive test data for current task: $\mathcal{D}_t^{ts}$}
\STATE{Suffer $\ell_t(\bw_t)\triangleq\ell(\hat{\bw}_t, \mathcal{D}^{ts}_t)=\E_{\bx,y\sim\mathcal{D}^{ts}_t}[\ell(\hat{\bw}_t, \bx;y)]$}
\STATE{Update $\bw_{t+1} = \mathcal{A}(\bw_1,\ell_1(\bw_1),\ldots,\ell_t(\bw_t))$}
\ENDFOR
\end{algorithmic}
\caption{Online Meta-Learning}
\label{algo:mlol}
\end{algorithm}

Algorithm~\ref{algo:mlol} is the online meta-learning framework proposed in~\cite{finn2019online}. A meta-learner $\bw_t$ is maintained to preserve the prior knowledge learned from past rounds. For each new task $\mathcal{T}_t$, one is first given some training data $\mathcal{D}_t^{tr}$ for adapting $\bw_t$ to the current task following some strategy $\bU(\cdot)$. Then the test data $\mathcal{D}_t^{ts}$ will be revealed for evaluating the performance of the adapted learner $\hat{\bw}_t$. The loss suffered at this round $\ell_t(\bw_t)$ can then be fed into an online learning algorithm $\mathcal{A}$ to update $\bw_t$. We use $\bU(\bw_t,\mathcal{D}^{tr}_t)=\bw_t - \alpha\nabla \E_{\bx,y\sim\mathcal{D}^{tr}_t}[\ell(\bw_t, \bx;y)]$ following~\cite{finn2019online} where $\alpha$ is the step-size.

As tasks can be very different, the original regret in Equation~\eqref{eq:reg} of competing with a fixed learner across all tasks becomes less meaningful. Thus, Finn et al.~\cite{finn2019online} changed it to:
\begin{equation*}
\text{Regret}'_T(\bw) =
\sum^T_{t=1}\ell(\bU(\bw_t, \mathcal{D}_t^{tr}), \mathcal{D}_t^{ts}) - \sum^T_{t=1}\ell(\bU(\bw, \mathcal{D}_t^{tr}), \mathcal{D}_t^{ts})~,
\label{eq:reg2}
\end{equation*}
which competes with any fixed \emph{meta-learner}. Under this, they designed the Follow the Meta Leader algorithm enjoying a logarithmic regret when assuming strong-convexity on $\ell$.

\section{Problem Formulation}
\label{sec:noncon}
In this section, we generalize the online meta-learning algorithm to non-convex setting by first demonstrating the infeasibility of regret of form~\eqref{eq:reg} and then introducing an alternative performance measure.

Finding the global minimum for a non-convex function in general is known to be NP-hard. Yet, if we could find an online learning algorithm with a $o(T)$ regret for some non-convex function classes, we can optimize any function $f$ of that class efficiently: simply run the online learning algorithm but with the objective $f$ as the loss $\ell_t$ at each round, and choose a random update as output. This gives us:
\begin{equation*}
\begin{aligned}
& \E_i[f(\bw_i)]-\min_{\bw\in\mathcal{K}} f(\bw)
=
\frac1T\sum^T_{t=1} f(\bw_t) - \min_{\bw\in\mathcal{K}} f(\bw) \\
& \phantom{oghaooo}= 
\frac1T\sum^T_{t=1} \ell_t(\bw_t) - \min_{\bw\in\mathcal{K}} \frac1T\sum^T_{t=1} \ell_t(\bw)
\in
o(1)~,
\end{aligned}
\end{equation*}
which leads to a contradiction unless P=NP. Thus, we have to find another performance measure for the non-convex case. One potential candidate is the local regret proposed by Hazan et al.~\cite{hazan2017efficient}:
\begin{equation}
\begin{aligned}
& \mathcal{R}_{m}(T)
\triangleq
\sum^T_{t=1}\|\nabla F_{t,m}(\bw_t)\|^2~,
\end{aligned}
\end{equation}
where $F_{t,m}(\bw_t)\triangleq\frac1m\sum^{m-1}_{i=0}\ell_{t-i}(\bw_t)$, $1\le m\le T$, and $\ell_i(\cdot)=0$ for $i\le0$. The reason for using sliding-window in $F$, especially a large window, can be justified by Theorem 2.7 in \cite{hazan2017efficient}.
\section{Algorithm \& Theoretical Guarantees}
\label{sec:algo}
\subsection{Stochasticity  of Online Meta-learning Algorithms}

In practice, $\mathcal{D}_t^{ts}$ is typically just a random sample batch of the whole test-set, the losses and gradients obtained at each round are thus (unbiased) estimates of the true ones. This is the stochastic setting which we formalize by making following assumptions.

\begin{assumption}
We assume that at each round $t$, each call to any stochastic gradient oracle $\bg_i$, $i\in\{t-m+1,\ldots,t\}$, yields an i.i.d.~random vector
$\bg_i(\bw_t, \xi_{t,i})$ with the following properties:
\begin{enumerate}[label=(\alph*), topsep=1pt]
\item $\E_{\xi_{t,i}}\left[\bg_i(\bw_t,\xi_{t,i})|\xi_{1:t-1}\right]
=\nabla \ell_i(\bw_t)$~;
\item $\E_{\xi_{t,i}}\left[\left\|\bg_i(\bw_t,\xi_{t,i})-\nabla \ell_i(\bw_t)\right\|^2\vert\xi_{1:t-1}\right]
\le\sigma^2$~;
\item Mutual independence: for $i\ne j$,
\begin{align*}
&\E_{\xi_{t,i},\xi_{t,j}}[\langle\bg_i(\bw_t,\xi_{t,i}),\ \bg_j(\bw_t,\xi_{t,j})\rangle|\xi_{1:t-1}] = \\ &\langle\E_{\xi_{t,i}}[\bg_i(\bw_t,\xi_{t,i})|\xi_{1:t-1}],\ \E_{\xi_{t,j}}[\bg_j(\bw_t,\xi_{t,j})|\xi_{1:t-1}]\rangle~.
\end{align*}
\end{enumerate}
where $\xi_{1:t-1}=\{\xi_{1,1},\xi_{2,1},\xi_{2,2},\ldots,\xi_{t-1,t-m},\ldots,\xi_{t-1,t-1}\}$, and $\E_{\xi_{t,i}}[\bu|\xi_{1:t-1}]$ denotes the conditional expectation of $\bu$ with respect to $\xi_{1:t-1}$. Also note that $\bg_i(\cdot)=0$ for $i\le0$.
\label{ass:sg}
\end{assumption}

Hazan et al. proposed a time-smoothed online gradient descent algorithm~\cite{hazan2017efficient} for such case. Yet, that algorithm's performance critically relies on the choice of the step-size $\eta$, and may even diverge if $\eta>\frac2\beta$ where $\beta$ is the (often unknown) smoothness of the loss function. We thus propose to use the AdaGrad-Norm~\cite{ward2018adagrad} algorithm (Algorithm~\ref{algo:adagrad}) as the online learning algorithm $\mathcal{A}$ in Algorithm~\ref{algo:mlol} instead. Here, $b_1>0$ is the initialization of the accumulated squared norms and prevents division by 0, while $\eta > 0$ is to ensure homogeneity and that the units match.

\begin{algorithm}[t]
\begin{algorithmic}[1]
\STATE{\textbf{Input}: Initialize $\bw_1\in\R^d$, $b_1>0$, $\eta>0$.}
\FOR{$t = 1,\ldots,T$}
\STATE{Generate $\bG_{t,m}(\bw_t) = \frac{1}{m}\sum^{m-1}_{i=0}\bg_{t-i}(\bw_{t}, \xi_{t,t-i})$}
\STATE{$b_{t+1}^2\leftarrow b^2_{t} + \|\bG_{t,m}(\bw_t)\|^2$}
\STATE{$\bw_{t+1} \leftarrow \bw_{t}-\frac{\eta}{b_{t+1}}\bG_{t,m}(\bw_t)$}
\ENDFOR
\end{algorithmic}
\caption{AdaGrad-Norm}
\label{algo:adagrad}
\end{algorithm}
\subsection{Regret Analysis}
\label{subsec:analysis}

We present below an analysis of this algorithm assuming the loss function $\ell:\mathcal{K}\rightarrow\R$ satisfies:
\begin{assumption} $\ell$ is twice differentiable and $\forall \bu,\bv\in\mathcal{K}$:
\begin{enumerate}[label=(\alph*)]
\item$L$-Lipschitz: $\|\ell(\bu) - \ell(\bv)\|\le L\|\bu-\bv\|$~.
\item $\beta$-smooth: $\|\nabla \ell(\bu)-\nabla \ell(\bv)\| \leq \beta \|\bu-\bv\|$~.\\
Note that this implies~\cite[Lemma 1.2.3]{Nesterov2003}:
\vspace{-0.75em}\begin{equation}
\label{eq:smooth2}
\left|\ell(\bv)-\ell(\bu)-\langle \nabla \ell(\bu), \bv-\bu\rangle\right|
\leq \frac{\beta}{2}\|\bv-\bu\|^2~.
\end{equation}
\vspace{-1.7em}
\item$H$-Hessian-Lipschitz: $\|\nabla^2\ell(\bu) - \nabla^2\ell(\bv)\|\le H\|\bu-\bv\|$~.
\item$M$-Bounded: $|\ell(\bu)|\le M$
\end{enumerate}
\label{ass:loss}
\end{assumption}

Under Assumption~\ref{ass:loss} of $\ell$, we can derive the following properties of $\ell_t$ (the proof can be found in the Appendix):
\begin{lemma}
Assuming Assumption~\ref{ass:loss} holds, $\ell_t$ is $M$-Bounded, $L'\triangleq(1+\alpha \beta)L$-Lipschitz, and $\beta'\triangleq(\alpha LH + (1+\alpha\beta)^2\beta)$-smooth.
\label{lm:property}
\end{lemma}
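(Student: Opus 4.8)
The plan is to view the per-round meta-loss as the composition $\ell_t(\bw)=\ell(\bU(\bw))$, where the inner map is the one-step adaptation $\bU(\bw)=\bw-\alpha\nabla\ell(\bw)$ and the inner and outer copies of $\ell$ are evaluated on the training and test data of round $t$; since both are instances of $\ell$, each inherits every bound in Assumption~\ref{ass:loss}. Each claimed property of $\ell_t$ is then obtained by pushing the matching property of $\ell$ through this composition. The boundedness is immediate: $\ell_t$ is just $\ell$ evaluated at the adapted point $\bU(\bw)$, so $|\ell_t(\bw)|=|\ell(\bU(\bw))|\le M$.

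For the Lipschitz constant I would differentiate through the composition. The Jacobian of the adaptation map is $\nabla\bU(\bw)=\bI-\alpha\nabla^2\ell(\bw)$, which is symmetric, so writing $A(\bw)\triangleq\bI-\alpha\nabla^2\ell(\bw)$ and $\bq(\bw)\triangleq\nabla\ell(\bU(\bw))$ the chain rule gives
\begin{equation*}
\nabla\ell_t(\bw)=A(\bw)\,\bq(\bw)~.
\end{equation*}
By $\beta$-smoothness $\norm{\nabla^2\ell(\bw)}\le\beta$, hence $\norm{A(\bw)}\le 1+\alpha\beta$; by $L$-Lipschitzness $\norm{\bq(\bw)}\le L$. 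Therefore $\norm{\nabla\ell_t(\bw)}\le(1+\alpha\beta)L=L'$, giving the stated Lipschitz constant.

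The smoothness bound is the main piece of work, and the only place where all four assumptions are needed at once. I would start from the add-and-subtract decomposition
\begin{equation*}
\nabla\ell_t(\bu)-\nabla\ell_t(\bv)=A(\bu)\bigl(\bq(\bu)-\bq(\bv)\bigr)+\bigl(A(\bu)-A(\bv)\bigr)\bq(\bv)~,
\end{equation*}
and bound the two terms separately. For the first, $\norm{A(\bu)}\le 1+\alpha\beta$ as above, and $\norm{\bq(\bu)-\bq(\bv)}\le\beta\,\norm{\bU(\bu)-\bU(\bv)}\le\beta(1+\alpha\beta)\norm{\bu-\bv}$, using $\beta$-smoothness of the outer $\ell$ together with the fact that $\bU$ is itself $(1+\alpha\beta)$-Lipschitz. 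For the second, $\norm{A(\bu)-A(\bv)}=\alpha\norm{\nabla^2\ell(\bu)-\nabla^2\ell(\bv)}\le\alpha H\norm{\bu-\bv}$ by the $H$-Hessian-Lipschitz property, while $\norm{\bq(\bv)}\le L$. Adding the two contributions yields $\norm{\nabla\ell_t(\bu)-\nabla\ell_t(\bv)}\le\bigl[(1+\alpha\beta)^2\beta+\alpha LH\bigr]\norm{\bu-\bv}=\beta'\norm{\bu-\bv}$, as claimed. The delicate point is simply to keep this product-rule split clean so that the Hessian-Lipschitz contribution $\alpha LH$ and the propagated-smoothness contribution $(1+\alpha\beta)^2\beta$ emerge with exactly the right constants; the rest is routine once the chain-rule expression for $\nabla\ell_t$ is in hand.
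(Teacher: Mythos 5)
Your proposal is correct and takes essentially the same route as the paper: both compute $\nabla\ell_t(\bw)=(\bI-\alpha\nabla^2\hat{f}_t(\bw))\nabla f_t(\bU_t(\bw))$ via the chain rule, bound its norm by $(1+\alpha\beta)L$ for Lipschitzness, and prove smoothness by the same add-and-subtract product-rule split using $\|\bU_t(\bu)-\bU_t(\bv)\|\le(1+\alpha\beta)\|\bu-\bv\|$. The only cosmetic difference is that you insert the cross term $A(\bu)\bq(\bv)$ where the paper inserts $A(\bv)\bq(\bu)$, a mirror-image decomposition yielding the identical constant $\alpha LH+(1+\alpha\beta)^2\beta$.
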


The following theorem shows that by selecting $m\in\Theta(T)$, a logarithmic regret of the algorithm is guaranteed  w.r.t. any $b_1, \eta>0$.
\begin{thm}
Let $\ell_1,\ldots,\ell_T$ satisfy Assumptions~\ref{ass:loss}. Then, feeding Algorithm~\ref{algo:adagrad} into Algorithm~\ref{algo:mlol} with access to stochastic gradient oracles satisfying Assumptions~\ref{ass:sg} gives the following upper bound of $\mathcal{R}_m(T)$, with probability $1-\delta$:
\[
\mathcal{R}_m(T)
\le
\frac{48C^2}{\delta^2}
+
\frac{8b_1C}{\delta}
+
\frac{8\sigma C\sqrt{T}}{\delta^{3/2}\sqrt{m}}~,
\]
where $C =  \frac{4MT}{\eta m}
+
\left(\frac{\eta\beta'+4\sigma/\sqrt{m}}{2}\right)\ln\left(1 + \frac{2(\sigma^2/m+L'^2)T}{b_1^2}\right)$~.
\label{thm:adagrad}
\end{thm}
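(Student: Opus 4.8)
The plan is to run the standard descent-lemma-plus-telescoping analysis of AdaGrad-Norm, but adapted in two ways: to the sliding window $F_{t,m}$, and to the local-regret target $\sum_t\|\nabla F_{t,m}(\bw_t)\|^2$. First I would record the three structural facts that drive everything. Since $F_{t,m}$ is an average of the $\ell_{t-i}$, Lemma~\ref{lm:property} immediately gives that $F_{t,m}$ is $\beta'$-smooth and that $\|\nabla F_{t,m}(\bw_t)\|\le L'$; and because each $\ell_i$ is $M$-bounded, consecutive windows obey $|F_{t+1,m}(\bw)-F_{t,m}(\bw)|=\frac{1}{m}|\ell_{t+1}(\bw)-\ell_{t-m+1}(\bw)|\le\frac{2M}{m}$ for every $\bw$. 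This last fact is exactly what makes a moving-window quantity amenable to telescoping.

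Next I would apply $\beta'$-smoothness to the update $\bw_{t+1}=\bw_t-\frac{\eta}{b_{t+1}}\bG_{t,m}(\bw_t)$, giving
\[
F_{t,m}(\bw_{t+1})\le F_{t,m}(\bw_t)-\frac{\eta}{b_{t+1}}\langle\nabla F_{t,m}(\bw_t),\bG_{t,m}(\bw_t)\rangle+\frac{\eta^2\beta'}{2b_{t+1}^2}\|\bG_{t,m}(\bw_t)\|^2~.
\]
Replacing $F_{t,m}(\bw_{t+1})$ by $F_{t+1,m}(\bw_{t+1})-\frac{2M}{m}$ via the window bound, summing, and telescoping the left side to $F_{T+1,m}(\bw_{T+1})-F_{1,m}(\bw_1)\ge-2M$, I would rearrange to
\[
\sum_{t=1}^T\frac{1}{b_{t+1}}\langle\nabla F_{t,m}(\bw_t),\bG_{t,m}(\bw_t)\rangle\le\frac{2M}{\eta}+\frac{2MT}{\eta m}+\frac{\eta\beta'}{2}\sum_{t=1}^T\frac{\|\bG_{t,m}(\bw_t)\|^2}{b_{t+1}^2}~,
\]
and then invoke the standard AdaGrad telescoping inequality $\sum_t\frac{\|\bG_{t,m}\|^2}{b_{t+1}^2}\le\ln\frac{b_{T+1}^2}{b_1^2}$, whose argument I would further bound (in expectation, and later with high probability) by $1+\frac{2(\sigma^2/m+L'^2)T}{b_1^2}$ using $\E\|\bG_{t,m}\|^2\le L'^2+\sigma^2/m$. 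These two displays already generate the $\frac{MT}{\eta m}$ and $\frac{\eta\beta'}{2}\ln(\cdots)$ pieces of $C$.

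The crux --- and the step I expect to be the main obstacle --- is that $b_{t+1}$ is itself a function of the fresh gradient $\bG_{t,m}(\bw_t)$, so I cannot simply take a conditional expectation of the inner-product term and read off $\|\nabla F_{t,m}(\bw_t)\|^2/b_{t+1}$. I would resolve this with the AdaGrad-Norm proxy technique~\cite{ward2018adagrad}: let $\hat b_{t+1}^2:=b_t^2+\|\nabla F_{t,m}(\bw_t)\|^2+\sigma^2/m$, an upper proxy for $\E_t[b_{t+1}^2]$ that is measurable with respect to $\xi_{1:t-1}$ (here $\E_t$ is the conditional expectation given $\xi_{1:t-1}$, and the variance term $\sigma^2/m$ comes from averaging $m$ mutually-independent oracles, Assumption~\ref{ass:sg}(b)--(c)). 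Splitting $\frac{1}{b_{t+1}}=\frac{1}{\hat b_{t+1}}+\big(\frac{1}{b_{t+1}}-\frac{1}{\hat b_{t+1}}\big)$, the first part contributes exactly $\frac{\|\nabla F_{t,m}(\bw_t)\|^2}{\hat b_{t+1}}$ in conditional expectation (unbiasedness, Assumption~\ref{ass:sg}(a)), while the second is an error term; bounding $|\hat b_{t+1}^2-b_{t+1}^2|$ together with Cauchy--Schwarz controls it by the variance, which is precisely where the extra $\frac{4\sigma/\sqrt m}{2}\ln(\cdots)$ contribution to $C$ originates. The outcome of this step is the clean estimate $\E\big[\sum_t\|\nabla F_{t,m}(\bw_t)\|^2/\hat b_{t+1}\big]\le C$.

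Finally I would convert this into a bound on $\mathcal{R}_m(T)=\sum_t\|\nabla F_{t,m}(\bw_t)\|^2$ by a self-bounding argument: writing $\mathcal{R}_m(T)\le\big(\sum_t\|\nabla F_{t,m}\|^2/\hat b_{t+1}\big)\max_t\hat b_{t+1}$ and using $\max_t\hat b_{t+1}^2\lesssim b_1^2+\mathcal{R}_m(T)+\sigma^2T/m$ (valid with high probability, since $\sum_s\|\bG_{s,m}\|^2$ concentrates around $\mathcal{R}_m(T)+\sigma^2T/m$) gives a quadratic inequality of the form $R\le X\sqrt{b_1^2+R+\sigma^2T/m}$ with $X=\sum_t\|\nabla F_{t,m}\|^2/\hat b_{t+1}$; squaring and solving yields $R\lesssim X^2+b_1X+\sigma X\sqrt{T/m}$, which is exactly the three-term shape of the claimed bound. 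The remaining work is to replace the expectation bound $\E[X]\le C$ by a high-probability statement through Markov's inequality, and similarly for the concentration of the noise part of $\sum_s\|\bG_{s,m}\|^2$. Carefully combining these Markov steps with the squaring in the quadratic is what produces the distinct powers $\delta^{-2}$, $\delta^{-1}$, $\delta^{-3/2}$ (the extra $\delta^{-1/2}$ in the last term coming from the $\sigma^2T/m$ fluctuation inside the square root) and the numerical constants $48,8,8$; this bookkeeping is the most tedious but least conceptually demanding part of the proof.
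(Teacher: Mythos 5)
Your proposal follows essentially the same route as the paper's proof: the descent step under $\beta'$-smoothness, the measurable proxy $\hat b_{t+1}^2=b_t^2+\|\nabla F_{t,m}(\bw_t)\|^2+\sigma^2/m$ (the paper's $\tilde b_{t+1}$) with the split of $1/b_{t+1}$ and Young/Cauchy--Schwarz absorption producing the $4\sigma/\sqrt{m}$ term, the integral-lemma logarithmic bound on $\sum_t\|\bG_{t,m}\|^2/b_{t+1}^2$ with Jensen, and the endgame of two Markov applications feeding a quadratic inequality in $\mathcal{R}_m(T)$ solved with $\delta_1=\delta_2=\delta/2$. Your pointwise window-shift bound $|F_{t+1,m}(\bw)-F_{t,m}(\bw)|\le 2M/m$ is exactly the mechanism inside the paper's Lemma~\ref{lm:sumobj}, and your tighter $\E\|\bG_{t,m}\|^2\le L'^2+\sigma^2/m$ (versus the paper's factor-$2$ bound) only improves constants, so the argument is correct and equivalent.
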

\vspace{-2px}
Before showing the proof of Theorem 1, we need the following technical lemmas whose proofs can be found in the Appendix. For simplicity, we denote $\E_t$ as condition on $\xi_{1:t-1}$ and take expectation w.r.t.~$\xi_{t, t-m+1},\ldots,\xi_{t, t}$:
\vspace{-2px}
\begin{lemma}
As $\bG_{t,m}(\bw_t) = \frac{1}{m}\sum^{m-1}_{i=0}\bg_{t-i}(\bw_t,\xi_{t,t-i})$, and $F_{t,m}(\bw_t)=\frac{1}{m}\sum^{m-1}_{i=0}\ell_{t-i}(\bw_t)$, Assumption~\ref{ass:sg} gives us:
\begin{enumerate}[label=(\alph*)]
\item $\E_{t}\left[\bG_{t,m}(\bw_t)\right]
=\nabla F_{t,m}(\bw_t)$\label{ass:unbiase}
\item $\E_{t}\left[\left\|\bG_{t,m}(\bx_t)-\nabla F_{t,m}(\bx_t)\right\|^2\right]
\le\frac{\sigma^2}{m}$
\end{enumerate}
\label{lm:stocg}
\end{lemma}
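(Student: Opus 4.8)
The plan is to derive both claims directly from Assumption~\ref{ass:sg}, exploiting that $\bw_t$ is fully determined by $\xi_{1:t-1}$; hence under the conditional expectation $\E_t$ each gradient $\nabla\ell_i(\bw_t)$ is a deterministic constant that can be pulled out of the expectation.

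Part~(a) is immediate. Linearity of the gradient gives $\nabla F_{t,m}(\bw_t)=\frac1m\sum_{i=0}^{m-1}\nabla\ell_{t-i}(\bw_t)$, and applying $\E_t$ termwise to $\bG_{t,m}(\bw_t)$ and invoking Assumption~\ref{ass:sg}(a) on each summand yields the same average, so the two coincide.

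For part~(b), I would write the deviation as an average of per-oracle errors. Setting $\boldsymbol{e}_i\triangleq\bg_i(\bw_t,\xi_{t,i})-\nabla\ell_i(\bw_t)$ for $i\in\{t-m+1,\ldots,t\}$ (the index set of Assumption~\ref{ass:sg}), we have $\bG_{t,m}(\bw_t)-\nabla F_{t,m}(\bw_t)=\frac1m\sum_i\boldsymbol{e}_i$, so
\[
\E_t\Norm{\bG_{t,m}(\bw_t)-\nabla F_{t,m}(\bw_t)}^2
=\frac1{m^2}\sum_i\E_t\Norm{\boldsymbol{e}_i}^2
+\frac1{m^2}\sum_{i\ne j}\E_t\dotp{\boldsymbol{e}_i,\boldsymbol{e}_j}~.
\]
Assumption~\ref{ass:sg}(b) bounds each diagonal term by $\sigma^2$. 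The crux is showing the cross terms vanish: since Assumption~\ref{ass:sg}(c) governs $\E_t\dotp{\bg_i,\bg_j}$ and not the errors directly, I would expand $\dotp{\boldsymbol{e}_i,\boldsymbol{e}_j}$ into four inner products and take $\E_t$ of each, using~(c) on the $\dotp{\bg_i,\bg_j}$ term and~(a) on the two mixed terms; all four reduce to $\dotp{\nabla\ell_i(\bw_t),\nabla\ell_j(\bw_t)}$ and cancel, giving $\E_t\dotp{\boldsymbol{e}_i,\boldsymbol{e}_j}=0$ for $i\ne j$. Summing the $m$ diagonal contributions then gives $\frac{1}{m^2}\cdot m\sigma^2=\frac{\sigma^2}{m}$.

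The only real obstacle is this cross-term cancellation, which is precisely why the mutual-independence condition~(c) must be combined with the unbiasedness condition~(a); everything else is bookkeeping with linearity and the definition of $F_{t,m}$.
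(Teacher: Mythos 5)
Your proof is correct and follows essentially the same route as the paper's: part (a) by linearity, and part (b) by expanding the squared norm of the averaged errors into diagonal terms bounded via Assumption~\ref{ass:sg}(b) and cross terms killed via (c) together with (a). If anything, your four-term expansion of $\E_t\langle\boldsymbol{e}_i,\boldsymbol{e}_j\rangle$ is slightly more careful than the paper, which applies the factorization of (c) directly to the error vectors rather than to the raw oracle outputs; both versions rely on $\bw_t$ (and hence $\nabla\ell_i(\bw_t)$) being deterministic under $\E_t$, exactly as you observe.
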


\vspace{-2em}
\begin{lemma} Given Assumption~\ref{ass:loss}(d), we have:
$\sum^T_{t=1}\E[F_{t,m}(\bw_t)-F_{t,m}(\bw_{t+1})]\le\frac{4MT}{m}$.
\label{lm:sumobj}
\end{lemma}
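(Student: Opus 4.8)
The plan is to exploit the sliding-window (moving-average) structure of $F_{t,m}$ to turn the sum into a telescoping sum plus a controllable residual. The naive bound from $M$-boundedness gives only $|F_{t,m}(\bw_t)-F_{t,m}(\bw_{t+1})|\le 2M$ and hence $2MT$, which is far too weak, so the window indices must be realigned first. The key algebraic identity I would establish is that for any fixed $\bw$,
\[
F_{t+1,m}(\bw)-F_{t,m}(\bw)=\tfrac{1}{m}\big[\ell_{t+1}(\bw)-\ell_{t-m+1}(\bw)\big],
\]
which follows immediately from the definition of $F_{t,m}$ as a length-$m$ average: advancing $t$ by one admits the newest loss $\ell_{t+1}$ and discards the oldest $\ell_{t-m+1}$.

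Using this identity I would substitute $F_{t,m}(\bw_{t+1})=F_{t+1,m}(\bw_{t+1})-\tfrac{1}{m}[\ell_{t+1}(\bw_{t+1})-\ell_{t-m+1}(\bw_{t+1})]$ into the target sum. Writing $a_t:=F_{t,m}(\bw_t)$, the leading part collapses to $\sum_{t=1}^T(a_t-a_{t+1})=F_{1,m}(\bw_1)-F_{T+1,m}(\bw_{T+1})$, a clean telescope, and what remains is the residual $\tfrac{1}{m}\sum_{t=1}^T[\ell_{t+1}(\bw_{t+1})-\ell_{t-m+1}(\bw_{t+1})]$. I would then invoke Lemma~\ref{lm:property}, which guarantees $\ell_t$ is $M$-bounded, so the average $F_{t,m}$ is $M$-bounded as well. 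Hence $|F_{1,m}(\bw_1)-F_{T+1,m}(\bw_{T+1})|\le 2M$, while each summand of the residual is at most $2M/m$ in magnitude, bounding the residual by $2MT/m$. Adding these and using $1\le m\le T$ to absorb $2M\le 2MT/m$ yields the deterministic bound $4MT/m$. Since this holds for every realization of the random iterates $\bw_t$, taking expectations preserves it and gives the claim.

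The main obstacle to watch is that the sum does \emph{not} telescope directly, because the window index and the evaluation point are mismatched: $F_{t,m}$ is evaluated at $\bw_{t+1}$ rather than $\bw_t$. Forcing the telescope is exactly what generates the residual correction terms, and it also introduces the boundary term $F_{T+1,m}(\bw_{T+1})$, which formally references $\ell_{T+1}$ lying outside the horizon. Because we only need a magnitude bound and every loss satisfies $|\ell_t|\le M$, this out-of-range boundary term is harmless, but it is the one step requiring care rather than mechanical computation.
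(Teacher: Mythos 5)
Your proof is correct and takes essentially the same route as the paper: both realign the window index so that consecutive $F$-terms are compared at a common evaluation point, telescope the aligned part, and bound each residual $\frac{1}{m}\left[\ell_{\cdot}-\ell_{\cdot}\right]$ term by $2M/m$ and the boundary terms by $2M$, absorbing the latter via $1\le m\le T$. The only cosmetic difference is that you shift the window forward (pairing $F_{t,m}(\bw_{t+1})$ with $F_{t+1,m}(\bw_{t+1})$) while the paper shifts backward (pairing $F_{t,m}(\bw_t)$ with $F_{t-1,m}(\bw_t)$, keeping all boundary terms inside the horizon); your out-of-range $F_{T+1,m}(\bw_{T+1})$ is indeed harmless as you observe, e.g.\ by setting $\ell_{T+1}:=0$, since $\ell_{T+1}$ cancels in your identity anyway.
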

\begin{lemma} [\cite{li2019convergence}, Lemma 9] 
\label{lm:integ}
Let $h:[0,+\infty)\rightarrow [0, +\infty)$ be a nonincreasing function, and $a_i\geq0$ for $i = 0, \cdots, T$.
Then
\begin{align*}
\sum_{t=1}^T a_t h\left(a_0+\sum_{i=1}^{t} a_i\right) 
&\leq \int_{a_0}^{\sum_{t=0}^T a_t} h(x) dx~.
\end{align*}
\end{lemma}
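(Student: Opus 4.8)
The plan is to realize the left-hand sum as a lower Riemann-type sum for the integral on the right, exploiting only the monotonicity of $h$ and the nonnegativity of the $a_i$. First I would introduce the partial sums $S_t \triangleq a_0 + \sum_{i=1}^{t} a_i$ for $t = 0,1,\ldots,T$ (so that $S_0 = a_0$), and observe that $S_t - S_{t-1} = a_t \geq 0$, hence $S_0 \leq S_1 \leq \cdots \leq S_T = \sum_{t=0}^{T} a_t$. This identifies the argument of $h$ in the $t$-th summand, namely $a_0 + \sum_{i=1}^{t} a_i$, with the right endpoint $S_t$ of the interval $[S_{t-1}, S_t]$, whose length is exactly $a_t$.

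The key step is the per-term bound $a_t\, h(S_t) \leq \int_{S_{t-1}}^{S_t} h(x)\, dx$. Since $h$ is nonincreasing and every $x \in [S_{t-1}, S_t]$ satisfies $x \leq S_t$, we have $h(x) \geq h(S_t)$ throughout that interval; integrating this pointwise inequality over $[S_{t-1}, S_t]$ and using that the interval has length $S_t - S_{t-1} = a_t$ yields the claim. When $a_t = 0$ the interval degenerates to a single point and both sides vanish, so the bound continues to hold trivially.

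Finally I would sum the per-term bounds over $t = 1,\ldots,T$ and telescope the integrals by additivity over adjacent intervals:
\[
\sum_{t=1}^{T} a_t\, h(S_t) \leq \sum_{t=1}^{T} \int_{S_{t-1}}^{S_t} h(x)\, dx = \int_{S_0}^{S_T} h(x)\, dx = \int_{a_0}^{\sum_{t=0}^{T} a_t} h(x)\, dx,
\]
which is exactly the claimed inequality. There is no genuine obstacle here: a monotone function is Riemann integrable on every bounded interval, so each integral is well defined. The only point requiring any care is the direction of the monotonicity inequality—it is essential that the argument of $h$ in the summand is the \emph{right} endpoint $S_t$ rather than $S_{t-1}$, since this is precisely what converts the nonincreasing property of $h$ into an upper bound on the sum rather than a lower one.
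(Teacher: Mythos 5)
Your proposal is correct and follows essentially the same route as the paper's proof: both introduce the partial sums $s_t=\sum_{i=0}^t a_i$, establish the per-term bound $a_t\,h(s_t)\le\int_{s_{t-1}}^{s_t}h(x)\,dx$ from the nonincreasing property of $h$, and telescope the integrals by summing over $t$. Your write-up merely makes explicit a few details the paper leaves implicit (the degenerate case $a_t=0$ and the Riemann integrability of a monotone function), so nothing further is needed.
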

\begin{figure*}[t]
    \centering
    \includegraphics[width=0.95\textwidth]{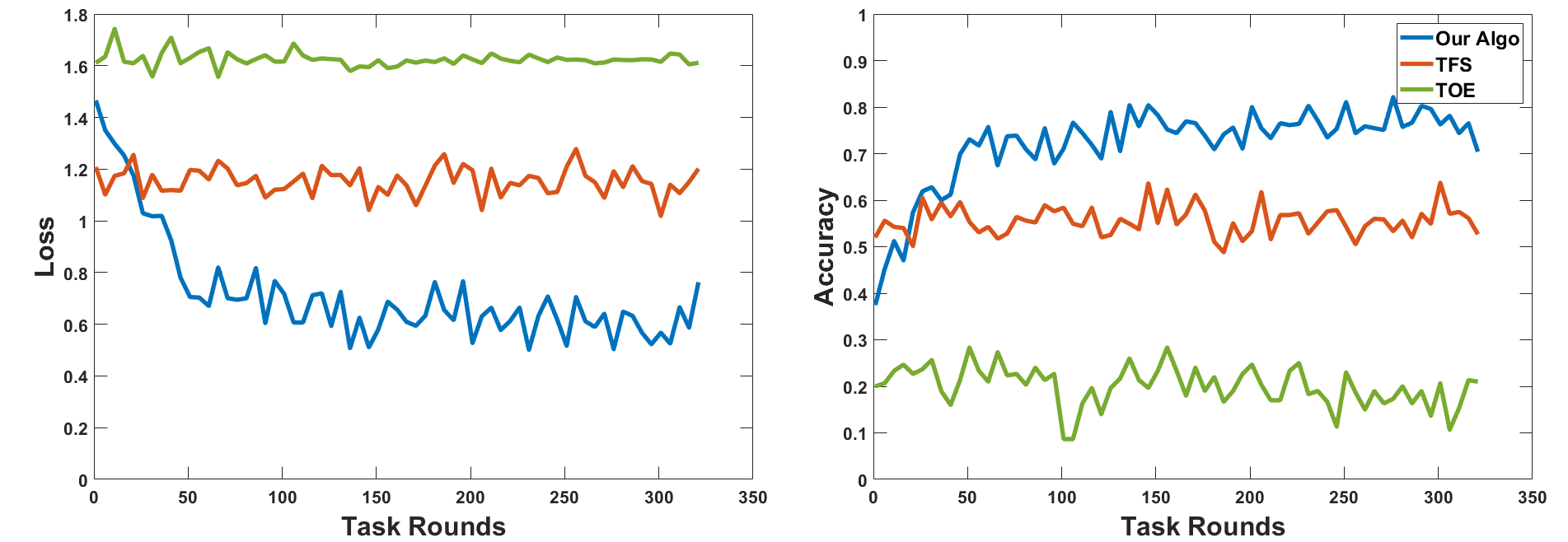}
    \vspace{-1.0em}
    \caption{The comparison between our algorithm, TOE, and TFS on performing few-shot classification on the Omniglot dataset.}
    \vspace{-1.3em}
    \label{fig:result}
\end{figure*}

\vspace{-2em}
\begin{proof}[Proof of Theorem~\ref{thm:adagrad}]
The proof follows that of Theorem 2.1 in~\cite{ward2018adagrad}.

\vspace{0.5em}First, as the average of $m$ $\beta'$-smooth functions, $F_{t,m}$ is also $\beta'$-smooth. Using the property in Assumption~\ref{ass:loss}(b) and the update formula (Line 5) in Algorithm~\ref{algo:adagrad} we have:
\begin{align*}
&\frac{F_{t,m}(\bw_{t+1})-F_{t,m}(\bw_t)}{\eta}\\
\le&
-\langle \nabla F_{t,m}(\bw_t), 
\frac{\bG_{t,m}(\bw_t)}{b_{t+1}} \rangle
+
\frac{\eta\beta'}{2b_{t+1}^2}\|\bG_{t,m}(\bw_t)\|^2~.
\end{align*}

Denote $\tilde{b}_{t+1}^2\triangleq b_t^2 + \|\nabla F_{t,m}(\bw_t)\|^2 + \frac{\sigma^2}{m}$, and take expectation w.r.t.~$\xi_{t, t-m+1},\ldots,\xi_{t, t}$ conditioned on $\xi_{1:t-1}$ (namely $\E_t[\cdot]$) :
\begin{align}
&\frac{\E_t[{F_{t,m}(\bw_{t+1})-F_{t,m}(\bw_t)}]}{\eta}
\\
\le
&\E_t\left[\left(\frac1{\tilde{b}_{t+1}}
-
\frac{1}{b_{t+1}}\right) \langle \nabla F_{t,m}(\bw_t), \bG_{t,m}(\bw_t)\rangle\right]\label{eq:inner}\\
&-\frac{\|\nabla F_{t,m}(\bw_t)\|^2}{\tilde{b}_{t+1}}
+
\frac{\eta\beta'}{2}\E_t\left[\frac{1}{b_{t+1}^2}\|\bG_{t,m}(\bw_t)\|^2\right]~.
\end{align}

Second, from the definition of $b_{t+1}$ and $\tilde{b}_{t+1}$ we have:
\begin{align*}
\left|\frac{1}{\tilde{b}_{t+1}}
-
\frac{1}{b_{t+1}}\right|
&=
\frac{\left|\|\bG_{t,m}(\bw_t)\|^2 - \|\nabla F_{t,m}(\bw_t)\|^2 - \sigma^2/m\right|}{b_{t+1}\tilde{b}_{t+1}\left(b_{t+1}+\tilde{b}_{t+1}\right)}\\[0.5em]
&\le\frac{\left|\|\bG_{t,m}(\bw_t)\| - \|\nabla F_{t,m}(\bw_t)\|\right|}{b_{t+1}\tilde{b}_{t+1}}
+
\frac{\sigma/\sqrt{m}}{b_{t+1}\tilde{b}_{t+1}}~.
\end{align*}

Using this, and Jensen's inequality on $|\cdot|$ which is a convex function, we can upper-bound Equation~\eqref{eq:inner} by its absolute value which in turn can be upper-bounded by:
\begin{align}
&{{\frac{\E_t\left[\left|\|\bG_{t,m}(\bw_t)\| - \|\nabla F_{t,m}(\bw_t)\|\right|\|\bG_{t,m}(\bw_t)\|\|\nabla F_{t,m}(\bw_t)\|\right]}{b_{t+1}\tilde{b}_{t+1}}}}\label{eq:first}\\[0.5em]
&+
\frac{\E_t\left[\|\bG_{t,m}(\bw_t)\|\|\nabla F_{t,m}(\bw_t)\|\sigma/\sqrt{m}\right]}{b_{t+1}\tilde{b}_{t+1}}\label{eq:second}~.
\end{align}

Third, by using inequality $ab\le\frac{\lambda}{2}a^2 + \frac{1}{2\lambda}b^2$ with $\lambda = \frac{2\sigma^2/m}{\tilde{b}_{t+1}}$, $a=\frac{\|\bG_{t,m}(\bw_t)\|}{b_{t+1}}$, Equation~\eqref{eq:first} can be upper bounded by:
\begin{equation*}
\frac{\|\nabla F_{t,m}(\bw_t)\|^2}{4\tilde{b}_{t+1}}
+
\frac{\sigma}{\sqrt{m}}\E_t\left[\frac{\|\bG_{t,m}(\bw_t)\|^2}{b_{t+1}^2}\right]~,
\end{equation*}
where we used that $\left|\|\bu\| - \|\bv\|\right|\le\|\bu- \bv\|$ holds for $\forall\bu,\bv\in\R^d$. 

\vspace{0.5em}
Applying $ab\le\frac{\lambda}{2}a^2 + \frac{1}{2\lambda}b^2$ again but with $\lambda = \frac{2}{\tilde{b}_{t+1}}$, $a=\frac{\|\bG_{t,m}(\bw_t)\|\sigma/\sqrt{m}}{b_{t+1}}$, we can upper bound eq.~\eqref{eq:second} by:
\begin{equation*}
\frac{\|\nabla F_{t,m}(\bw_t)\|^2}{4\tilde{b}_{t+1}}
+
\frac{\sigma}{\sqrt{m}}\E_t\left[\frac{\|\bG_{t,m}(\bw_t)\|^2}{b_{t+1}^2}\right]~.
\end{equation*}

Fourth, putting above two inequalities back, and then in turn putting the result back into Equation~\eqref{eq:inner} give us:
\begin{align*}
&\frac{\E_t[{F_{t,m}(\bw_{t+1})]-F_{t,m}(\bw_t)}}{\eta}\\
\le
&-\frac{\|\nabla F_{t,m}(\bw_t)\|^2}{2\tilde{b}_{t+1}}
+
\left(\frac{\eta\beta'}{2}+\frac{2\sigma}{\sqrt{m}}\right)\E_t\left[\frac{1}{b_{t+1}^2}\|\bG_{t,m}(\bw_t)\|^2\right]~.
\end{align*}

\vspace{-2px}
Rearrange terms, then for both sides, take expectation w.r.t.~$\xi_{1:t-1}$ and sum from $t=1$ to $T$ :
\begin{align}\notag
&\sum^{T}_{t=1}\E\left[\frac{\|\nabla F_{t,m}(\bw_t)\|^2}{2\tilde{b}_{t+1}}\right]
\\
\le
&\frac{\sum^{T}_{t=1}[\E[F_{t,m}(\bw_t)]-\E[F_{t,m}(\bw_{t+1})]]}{\eta}\label{eq:term1}\\
&+
\left(\frac{\eta\beta'+4\sigma/\sqrt{m}}{2}\right)\E\sum^{T}_{t=1}\frac{\|\bG_{t,m}(\bw_t)\|^2}{b_{t+1}^2}\label{eq:term2}~.
\end{align}

As $b_{t+1}^2 = b_1^2 + \sum^{t}_{i=1}\|\bG_{i,m}(\bw_i)\|^2$, letting $h(x)$ be $1/x$ in Lemma~\ref{lm:integ} gives us:
\begin{align*}
\E\left[\sum^{T}_{t=1}\frac{\|\bG_{t,m}(\bw_t)\|^2}{b_{t+1}^2}\right]
\le
\ln\left(1 + \frac{\sum^{T}_{t=1}\E\left[\|\bG_{t,m}(\bw_t)\|^2\right]}{b_1^2}\right)~,
\end{align*}
where we used Jensen's inequality for $\ln(x)$ which is a concave function in $(0,+\infty)$.

Since each $\ell_t$ is $L'$-Lipschitz, so is $F_{t,m}(\cdot)$, thus, using Cauchy-Schwartz inequality:
\begin{align}
\E\left[\|\bG_{t,m}(\bw_t)\|^2\right]
&\le
2\E\left[\|\bG_{t,m}(\bw_t) - \nabla F_{t,m}(\bw_t)\|^2\right]\nonumber\\
&\phantom{\le}+ 2\E\left[\|\nabla F_{t,m}(\bw_t)\|^2\right]\label{eq:grad}\\
&\le
2(\sigma^2/m+L'^2)\nonumber~.
\end{align}

Putting the above inequality back into Equation~\eqref{eq:term2} and Lemma~\ref{lm:sumobj} back into Equation~\eqref{eq:term1}, we have:
\begin{align}\notag
\sum^{T}_{t=1}\E\left[\frac{\|\nabla F_{t,m}(\bw_t)\|^2}{2\tilde{b}_{t+1}}\right]
\le&
\frac{4MT}{\eta m}\\
+
\left(\frac{\eta\beta'+4\sigma/\sqrt{m}}{2}\right)&\ln\left(1 + \frac{2(\sigma^2/m+L'^2)T}{b_1^2}\right)\label{eq:cons}~.
\end{align}

\vspace{-2px}
Finally, using Markov's inequality, with probability $1-\delta_1$, Lemma~\ref{lm:stocg}(b) gives us:
\begin{equation*}
\sum^{T}_{t=1}\|\nabla F_{t,m}(\bw_t) -\bG_{t,m}(\bw_t)\|^2\le\frac{T\sigma^2}{m\delta_1}~.
\end{equation*}

\vspace{-2px}
Denote $Z\triangleq\sum^{T}_{t=1}\|\nabla F_{t,m}(\bw_t)\|^2$. Using similar derivation in Equation~\eqref{eq:grad}, with probability $1-\delta_1$ we have:
\begin{align*}
b_{T}^2 + \|\nabla F_{T,m}(\bw_{T})\|^2 + \sigma^2/m
\le\ 
b_1^2 + 2Z + 2T\frac{\sigma^2}{m\delta_1}
\end{align*}

\vspace{-2px}
This means, with probability $1-\delta_1$, we have:
\begin{align*}
\sum^{T}_{t=1}\frac{\|\nabla F_{t,m}(\bw_t)\|^2}{2\tilde{b}_{t+1}}
&\ge
\frac{\sum^{T}_{t=1}\|\nabla F_{t,m}(\bw_t)\|^2}{2\sqrt{b_{T}^2 + \|\nabla F_{T,m}(\bw_T)\|^2 + Z +\sigma^2/m}}\\
&\ge
\frac{\sum^{T}_{t=1}\|\nabla F_{t,m}(\bw_t)\|^2}{2\sqrt{b_1^2 + 3Z + 2T\frac{\sigma^2}{m\delta_1}}}~.
\end{align*}

\vspace{-2px}
Denote the right-hand side of Equation~\eqref{eq:cons} as $C$, and use Markov's inequality again we have, with probability $1-\delta_2$:
\begin{equation*}
\sum^{T}_{t=1}\frac{\|\nabla F_{t,m}(\bw_t)\|^2}{2\tilde{b}_{t+1}}
\le
\frac{C}{\delta_2}~.
\end{equation*}

\vspace{-2px}
Therefore, with probability $1-\delta_1-\delta_2$, we have
\begin{equation*}
\frac{Z}{2\sqrt{b_1^2 + 3Z + 2T\frac{\sigma^2}{m\delta_1}}}
\le
\frac{C}{\delta_2}~.
\end{equation*}

By solving the above "quadratic" inequality of $Z$ and letting $\delta_1=\delta_2=\frac{\delta}{2}$, we arrive at the end.
\end{proof}

\section{Experiment}
\label{sec:exp}

We evaluated our algorithm on the few-shot image classification task of the Omniglot~\cite{lake2015human} dataset which consists of 20 instances of 1623 characters from 50 different alphabets. The dataset is augmented with rotations by multiples of 90 degrees following~\cite{santoro2016meta}.

We employed the $N$-way $K$-shot protocol~\cite{vinyals2016matching}: at each round, pick $N$ unseen characters irrespective of alphabets. Provide the meta-learner $\bw_t$ with $K$ different drawings of each of the $N$ characters as the training set $\mathcal{D}^{tr}$, then evaluate the adapted model $\hat{\bw}_t$’s ability on new unseen instances within the $N$ classes (namely the test set $\mathcal{D}^{ts}$). We chose the 5-way 5-shot scheme, and used 15 samples per character for testing following~\cite{ravi2016optimization}.

The model we used is a CNN following~\cite{vinyals2016matching}. It contains 4 modules, each of which is a 3$\times$3 convolution with 64 filters followed by batch normalization~\cite{ioffe2015batch}, a ReLu non-linearity and 2$\times$2 max-pooling. Images are downsampled to 28$\times$28 so that the resulting feature map of the last hidden layer is 1$\times$1$\times$64. The last layer is fed into a fully connected layer and the loss we used is the Cross-Entropy loss.

To study if our algorithm provides any empirical benefit over traditional methods, we compare it to two benchmark algorithms~\cite{finn2019online}: Train on Everything (TOE), and Train from Scratch (TFS). On each round $t$, both initialize a new model. The difference is that TOE trains over all available data, both training and testing, from all past tasks, plus $D_t^{tr}$ at current round, while TFS only uses $D_t^{tr}$ for training.

The experiments are performed in PyTorch~\cite{paszke2017automatic}, and parameters are by default if no specification is provided. For the parameter $\alpha$ in the local adapter strategy $\bU(\cdot)$ in Algorithm~\ref{algo:mlol}, we set it to be 0.1 everywhere, and the gradient descent step is performed only once for each task. For the AdaGrad-Norm algorithm (Algorithm~\ref{algo:adagrad}) we used, we set $b_1=\eta=1$ as suggested in the original paper~\cite{ward2018adagrad}. The TFS and TOE used Adam~\cite{Kingma2014AdamAM} with default parameters.

The result is shown in Figure~\ref{fig:result} which suggests that our algorithm gradually accumulates prior knowledge, which enables fast learning of later tasks. TFS provides a good example of how CNN performs when the training data is scarse. On the contrary, TOE behaves nearly as random guessing. The inferiority of TOE to TFS is somehow surprising, as TOE has much more training data than TFS. The reason is that TOE regards all training data as coming from a single distribution, and tries to learn a model that works for all tasks. Thus, when tasks are substantially different from each other, TOE might even incur negative transfer and fail to solve any single task as has been observed in~\cite{parisotto2015actor}. Meanwhile, by using training data of the current task only, TFS avoids negative transfer, but also rules out learning of any connection between tasks. Our algorithm, in contrast, is designed to discover common structures across tasks, and use these information to guide fast adaptation to new tasks.
\section{Conclusion}
\label{sec:conc}
The continual lifelong learning problem is common in real-life, where an agent needs to accumulate knowledge from every task it encounters, and utilize that knowledge for fast learning of new tasks. To solve this problem, we can combine the meta-learning and the online-learning paradigms to form the online meta-learning framework. In this work, we generalized this framework to the non-convex setting, and introduced the local regret to replace the original regret definition. We applied it to the stochastic setting, and showed its superiority both in theory and practice. In the future work, we would like to evaluate our algorithm on harder learning problems over larger scale datasets.

% \emph{For example: Future work would be to evaluate our algorithm on a harder task like Mini-ImageNet.} \textcolor{red}{Songtao, is it too short?, what is the limitation of this work and do we have future work?}
% {\color{blue}I think it is enough}
% \vfill\pagebreak

%\section{REFERENCES}
%\label{sec:refs}

% References should be produced using the bibtex program from suitable
% BiBTeX files (here: strings, refs, manuals). The IEEEbib.bst bibliography
% style file from IEEE produces unsorted bibliography list.
% -------------------------------------------------------------------------

\clearpage \newpage
\vfill\pagebreak
\clearpage
\newpage
\bibliographystyle{IEEEbib}
\bibliography{refs}

\newpage
\onecolumn
\appendix
\section{Appendix}
\setcounter{equation}{0}
\subsection{Proof of Lemma~\ref{lm:property}}
\textbf{Lemma 1. }\emph{
Assuming Assumption~\ref{ass:loss}, $\ell_t$ is $M$-Bounded, $L'\triangleq(1+\alpha \beta)L$-Lipschitz, and $\beta'\triangleq(\alpha LH + (1+\alpha\beta)^2\beta)$-smooth.}

\begin{proof}
We first write out the complete formula of $\ell_t$:
\begin{align*}
\ell_t(\bw)
&=
\ell(\hat{\bw}, \mathcal{D}^{ts}_t)\\
&=
\E_{\bx^{ts},y^{ts}\sim\mathcal{D}^{ts}_t}[\ell(\bU(\bw, \mathcal{D}_t^{tr}), \bx^{ts}; y^{ts})]\\
&=
\E_{\bx^{ts},y^{ts}\sim\mathcal{D}^{ts}_t}[\ell(\bw - \alpha\nabla \E_{\bx^{tr},y^{tr}\sim\mathcal{D}^{tr}_t}[\ell(\bw, \bx^{tr};y^{tr})], \bx^{ts}; y^{ts})]\\
&\triangleq
f_t(\bw-\alpha\nabla\hat{f}_t(\bw))~.
\end{align*}

The $M$-Boundedness is straight-forward.

To show the Lipschitzness, we derive $\nabla\ell_t$:
\begin{equation*}
\nabla\ell_t(\bw)
=
(\bI-\alpha\nabla^2\hat{f}_t(\bw))\nabla f_t(\bw-\alpha\nabla\hat{f}_t(\bw))~.
\end{equation*}

Note that $f_t$ and $\hat{f}_t$ both share the properties of $\ell$, thus, from Assumption~\ref{ass:loss}(a,b), we have:
\begin{equation*}
\|\nabla\ell_t(\bw)\|
\le
(1+\alpha\beta)\|\nabla f_t(\bw-\alpha\nabla\hat{f}_t(\bw))\|
\le
(1+\alpha\beta)L~.
\end{equation*}

Next, denoting $\bU(\bw,\mathcal{D}^{tr}_t)$ as $\bU_t(\bw)$, we have $\forall \bu,\bv\in\mathcal{K}$:
\begin{align*}
&\phantom{==}\|\nabla\ell_t(\bu)-\nabla\ell_t(\bv)\|\\
&=
\|\nabla\bU_t(\bu)\nabla f_t(\bU_t(\bu)) - \nabla\bU_t(\bv)\nabla f_t(\bU_t(\bv))\|\\
&=
\|\nabla\bU_t(\bu)\nabla f_t(\bU_t(\bu)) - \nabla\bU_t(\bv)\nabla f_t(\bU_t(\bu))+ \nabla\bU_t(\bv)\nabla f_t(\bU_t(\bu)) - \nabla\bU_t(\bv)\nabla f_t(\bU_t(\bv))\|\\
&\le
\|(\nabla\bU_t(\bu) - \nabla\bU_t(\bv))\nabla f_t(\bU_t(\bu))\|
+ \|\nabla\bU_t(\bv)(\nabla f_t(\bU_t(\bu)) - \nabla f_t(\bU_t(\bv)))\|\\
&=
\alpha\|(\nabla^2\hat{f}_t(\bu) - \nabla^2\hat{f}_t(\bv))\nabla f_t(\bU_t(\bu))\|
+
\|(\bI-\alpha\nabla^2\hat{f}_t(\bv))(\nabla f_t(\bU_t(\bu)) - \nabla f_t(\bU_t(\bv)))\|\\
&\le
\alpha LH\|\bu-\bv\|
+
(1+\alpha\beta)\|\nabla f_t(\bU_t(\bu)) - \nabla f_t(\bU_t(\bv))\|\\
&\le
\alpha LH\|\bu-\bv\|
+
(1+\alpha\beta)\beta\|\bU_t(\bu) - \bU_t(\bv)\|\\
&\le
\alpha LH\|\bu-\bv\|
+
(1+\alpha\beta)^2\beta\|\bu - \bv\|~,
\end{align*}
where the first inequality uses the triangle inequality of a norm; the second inequality uses the smoothness and hessian-Lipschitzness assumptions; the third inequality uses the smoothness assumption.

We are left to prove the last inequality:
\begin{align*}
\|\bU_t(\bu)-\bU_t(\bv)\|
&=
\|\bu-\alpha\nabla\hat{f}_t(\bu) - \bv+\alpha\nabla\hat{f}_t(\bv)\|\\
&=
\|\bu- \bv
-
\alpha(\nabla\hat{f}_t(\bu)-\nabla\hat{f}_t(\bv))\|\\
&\le
\|\bu- \bv\|
+
\alpha\|\nabla\hat{f}_t(\bu)-\nabla\hat{f}_t(\bv)\|\\
&\le
(1+\alpha\beta)\|\bu-\bv\|~,
\end{align*}
where the the first inequality uses the triangle inequality of a norm, and the second inequality uses the smoothness assumption.
\end{proof}

\subsection{Proof of Lemma~\ref{lm:stocg}}
\textbf{Lemma 2. } \emph{
As $\bG_{t,m}(\bw_t) = \frac{1}{m}\sum^{m-1}_{i=0}\bg_{t-i}(\bw_t,\xi_{t,t-i})$, and $F_{t,m}(\bw_t)=\frac{1}{m}\sum^{m-1}_{i=0}\ell_{t-i}(\bw_t)$, Assumption~\ref{ass:sg} gives us:
\begin{enumerate}[label=(\alph*)]
\item $\E_{t}\left[\bG_{t,m}(\bw_t)\right]
=\nabla F_{t,m}(\bw_t)$\label{ass:unbiase}
\item $\E_{t}\left[\left\|\bG_{t,m}(\bx_t)-\nabla F_{t,m}(\bx_t)\right\|^2\right]
\le\frac{\sigma^2}{m}$
\end{enumerate}}

\begin{proof}
Note that $\E_t$ denotes conditioning on $\xi_{1:t-1}$ and take expectation w.r.t.~$\xi_{t, t-m+1},\ldots,\xi_{t, t}$.

In Assumption~\ref{ass:sg}(a) we assume $\E_{\xi_{t,i}}\left[\bg_i(\bw_t,\xi_{t,i})|\xi_{1:t-1}\right]
=\nabla \ell_i(\bw_t)$ for $i\in\{t-m+1,\ldots,t\}$, the linearity of expectation immediately gives us $\E_{t}\left[\bG_{t,m}(\bw_t)\right]
=\nabla F_{t,m}(\bw_t)$. 

To see the second part, we only need to expand $\E_t\left[\left\|\bG_{t,m}(\bx_t)-\nabla F_{t,m}(\bx_t)\right\|^2\right]$ as:
\begin{align*}
&\frac1{m^2}\E_t\left[\left\|\sum^{m-1}_{i=0}\bg_{t-i}(\bw_t,\xi_{t,t-i}) - \nabla\ell_{t-i}(\bw_t)\right\|^2\right]\\
=
&\frac1{m^2}\sum^{m-1}_{i=0}\sum^{m-1}_{j=0}\E_t\left[\langle\bg_{t-i}(\bw_t,\xi_{t,t-i}) - \nabla\ell_{t-i}(\bw_t),\  \bg_{t-j}(\bw_t,\xi_{t,t-j}) - \nabla\ell_{t-j}(\bw_t)\rangle\right]\\
=
&\frac1{m^2}\sum^{m-1}_{i=0}\E_t\left[\|\bg_{t-i}(\bw_t,\xi_{t,t-i}) - \nabla\ell_{t-i}(\bw_t)\|^2\right]
+
\frac1{m^2}\sum^{m-1}_{i=0}\sum_{j\ne i}\E_t\left[\langle\bg_{t-i}(\bw_t,\xi_{t,t-i}) - \nabla\ell_{t-i}(\bw_t),\  \bg_{t-j}(\bw_t,\xi_{t,t-j}) - \nabla\ell_{t-j}(\bw_t)\rangle\right]~.
\end{align*}

Each item of the first part in the last equation can be bounded by $\sigma^2$ according to Assumption~\ref{ass:sg}(b), which leads to a $\frac{\sigma^2}m$ overall upper-bound.

For the second part, we need to use the Mutual Independence assumption (namely Assumption~\ref{ass:sg}(c)):
\begin{align*}
&\E_t\left[\langle\bg_{t-i}(\bw_t,\xi_{t,t-i}) - \nabla\ell_{t-i}(\bw_t),\  \bg_{t-j}(\bw_t,\xi_{t,t-j}) - \nabla\ell_{t-j}(\bw_t)\rangle\right]\\
=
&\langle\E_t\left[\bg_{t-i}(\bw_t,\xi_{t,t-i}) - \nabla\ell_{t-i}(\bw_t)\right],\  \E_t\left[\bg_{t-j}(\bw_t,\xi_{t,t-j}) - \nabla\ell_{t-j}(\bw_t)\rangle\right]~.
\end{align*}

Use Assumption~\ref{ass:sg}(a) again we know that the above equation equals to 0. This proves part (b) of this lemma.
\end{proof}

\subsection{Proof of Lemma~\ref{lm:sumobj}}
\textbf{Lemma 3. } \emph{Given Assumption~\ref{ass:loss}(d), we have:
$\sum^T_{t=1}\E[F_{t,m}(\bw_t)-F_{t,m}(\bw_{t+1})]\le\frac{4MT}{m}$.}
\begin{proof}
\begin{align*}
&\sum^T_{t=1}\E[F_{t,m}(\bw_t)-F_{t,m}(\bw_{t+1})]\\
=
&\sum^T_{t=2}\E[F_{t,m}(\bw_t)-F_{t-1,m}(\bw_{t})]
+
F_{1,m}(\bw_1)
-
\E[F_{T,m}(\bw_{T+1})]\\
=
&\sum^T_{t=2}\frac1m\sum^{m-1}_{i=0}\E[\ell_{t-i}(\bw_t)-\ell_{t-1-i}(\bw_{t})]
+
\ell_{1}(\bw_1)
-
\frac1m\sum^{m-1}_{i=0}\E[\ell_{T-i}(\bw_{T+1})]\\
=
&\sum^T_{t=2}\frac1m\E[\ell_{t}(\bw_t)-\ell_{t-m}(\bw_{t})]
+
\ell_{1}(\bw_1)
-
\frac1m\sum^{m-1}_{i=0}\E[\ell_{T-i}(\bw_{T+1})]\\
\le
&\frac{2MT}{m} + M + M
\le
\frac{4MT}{m}~,
\end{align*}
where we use the definition that $\ell_i(\cdot)=0$ for $i\le0$, and $1\le m\le T$.
\end{proof}

\subsection{Proof of Lemma~\ref{lm:integ}}
\textbf{Lemma 4. } [\cite{li2019convergence}, Lemma 9] 
\emph{Let $h:[0,+\infty)\rightarrow [0, +\infty)$ be a nonincreasing function, and $a_i\geq0$ for $i = 0, \cdots, T$.
Then
\begin{align*}
\sum_{t=1}^T a_t h\left(a_0+\sum_{i=1}^{t} a_i\right) 
&\leq \int_{a_0}^{\sum_{t=0}^T a_t} h(x) dx~.
\end{align*}}
\begin{proof}
Denote $s_t=\sum_{i=0}^{t} a_i$.
\begin{align*}
a_t h(s_t) 
=  \int_{s_{t-1}}^{s_t} h(s_t) d x 
\leq \int_{s_{t-1}}^{s_t} h(x) d x~.
\end{align*}
Summing over $t=1, \cdots, T$, we have the stated bound.
\end{proof}

\end{document}